\author{\name Samuel Allen Alexander
\email samuelallenalexander@gmail.com \\
\addr Quantitative Research Analyst\\
The U.S.\ Securities and Exchange Commission\\
New York Regional Office
}
\title{The Archimedean trap: Why
traditional reinforcement learning will probably not yield AGI}
\begin{document}

\maketitle

\begin{abstract}
    After generalizing the Archimedean property of real numbers in such a
    way as to make it adaptable to non-numeric structures, we demonstrate
    that the real numbers cannot be used to accurately measure non-Archimedean
    structures. We argue that, since an agent with Artificial General
    Intelligence (AGI) should have no problem engaging in tasks that inherently
    involve non-Archimedean rewards, and since traditional reinforcement
    learning rewards are real numbers, therefore traditional reinforcement
    learning probably will not lead to AGI. We indicate two possible ways traditional
    reinforcement learning could be altered to remove this roadblock.
\end{abstract}

\section{Introduction}

Whenever we measure anything using a particular number system, the
corresponding measurements will be constrained by the structure of that
number system. If the number system has a different structure than
the things we are measuring with it, then our
measurements will suffer accordingly, just as if we were trying to
force square pegs into round holes.

For example, the natural numbers make lousy candidates for measuring
lengths in a physics laboratory. Lengths in the lab have
properties such as, for example, the fact that for any two distinct
lengths, there is an intermediate length strictly between them.
The natural numbers lack this property. Imagine the poor physicist,
brought up in a world of only natural numbers, scratching his or her
head upon encountering a rod with length strictly between two rods
of length $1$ and $2$.

It is tempting to think of the real numbers $\mathbb R$---i.e., the unique
complete ordered field---as a generic number system with whatever
structure suits our needs. But the
real numbers do have their own specific structure. That structure is
flexible enough to accomodate many needs, but we shouldn't just
take that for granted. One particular property constraining the real numbers
is the following.

\begin{lemma}
\label{specializedarchimedeanlemma}
(The Archimedean Property\footnote{The Archimedean property is named after Archimedes of Syracuse.
A similar property appears as the fifth axiom in his \emph{On the Sphere
and Cylinder} \citep{archimedes}:
\begin{quote}
    Further, of unequal lines, unequal surfaces, and unequal
    solids, the greater exceeds the less by such a magnitude
    as, when added to itself, can be made to exceed any
    assigned magnitude among those which are comparable with
    [it and with] one another.
\end{quote}
Note that Archimedes specifically restricts his statement to
lengths, surface areas and volumes, in fact going out of his
way to limit the magnitudes to which said length/area/volume
can be made to exceed (he could have saved some words
by stopping his sentence at ``...can be made to exceed any
assigned magnitude'', if that were his intention).

The Archimedean property is also closely related to Definition 4 of Book V of Euclid's
\emph{Elements} \citep{euclid}:
\begin{quote}
    (Those) magnitudes are said to have a ratio
    with respect to one another which, being
    multiplied, are capable of exceeding one
    another.
\end{quote}
Proposition 1 of Book X is also relevant.
Many math historians
speak of the modern-day Archimedean property, Archimedes' 5th axiom, and Euclid's
properties as being identical, but in fact they are all subtly different from one
another, see \cite{bair2013mathematical}.})
Let $r>0$ be any positive real number.
For every real number $y$, there is some natural number $n$
such that $nr>y$.
\end{lemma}

Rather than directly prove Lemma \ref{specializedarchimedeanlemma},
we will prove a generalized result which, we will argue, is
more adaptable to other structures.

\begin{lemma}
\label{generalizedarchimedeanlemma}
(The Generalized Archimedean Property)
Let $r>0$ be any positive real number.
For any $x,y\in\mathbb R$, say that $x$ is \emph{significantly less}
than $y$ if $x\leq y-r$.
If $x_0,x_1,x_2,\ldots$ is any infinite sequence of real numbers,
where each $x_i$ is significantly less than $x_{i+1}$, then for every real number $y$,
there exists some $i$ such that $y$ is significantly less than $x_i$.
\end{lemma}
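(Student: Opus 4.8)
The plan is to prove Lemma \ref{generalizedarchimedeanlemma} directly from the completeness of $\mathbb{R}$ (the least-upper-bound property), rather than bootstrapping from Lemma \ref{specializedarchimedeanlemma}; the latter route would be circular, since the point of the excerpt is to recover the specialized statement as a consequence of the generalized one. First I would unwind the definition of \emph{significantly less}: the hypothesis that $x_i$ is significantly less than $x_{i+1}$ means precisely $x_{i+1} \geq x_i + r$, so the sequence climbs by at least $r$ at each step, and the desired conclusion that $y$ is significantly less than some $x_i$ unwinds to $x_i \geq y + r$.

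I would then argue by contradiction. Suppose some real $y$ makes the conclusion fail, so that $x_i < y + r$ for every $i$; this exhibits $y + r$ as an upper bound of the nonempty set $S = \{x_i : i \in \mathbb{N}\}$, and completeness supplies $s = \sup S$. The crux is to play the fixed gap $r$ against this supremum: since $s - r < s$ and $s$ is the \emph{least} upper bound, $s - r$ is not an upper bound, so some term satisfies $x_j > s - r$. But then the very next term obeys $x_{j+1} \geq x_j + r > s$, contradicting the fact that $s$ bounds $S$. Hence no such $y$ exists.

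I expect the main obstacle to be conceptual rather than computational: resisting the temptation to invoke the ordinary Archimedean property---which would render the argument circular---and instead identifying the correct structural input, namely completeness. Once the supremum is produced, the finishing move of using a single step of size $r$ to overshoot it is immediate. The one piece of bookkeeping to watch is the pair of inequality directions concealed inside \emph{significantly less}, so that the negation of the conclusion, $x_i < y + r$ for all $i$, is formed correctly.
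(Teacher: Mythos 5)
Your proof is correct and is essentially identical to the paper's: both argue by contradiction, use completeness to obtain the least upper bound $z$ of $\{x_0, x_1, \ldots\}$, note that $z-r$ fails to be an upper bound, and then use one step of size $r$ to overshoot $z$. Your added remarks on avoiding circularity and on unwinding the inequality directions are sound but do not change the substance.
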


\begin{proof}
If not, then there is some $y$ such that $y+r > x_i$ for all $i$.
Thus, $X=\{x_0,x_1,x_2,\ldots\}$ has an upper bound. By the completeness
of $\mathbb R$, $X$ must have a least upper bound $z\in\mathbb R$.
Since $z$ is the least upper bound for $X$, $z-r$ is not an upper bound
for $X$, so there is some $i$ such that $x_i>z-r$.
By assumption, $x_i\leq x_{i+1}-r$, so $x_{i+1}>z$, contradicting the choice
of $z$.
\end{proof}

Lemma \ref{specializedarchimedeanlemma} follows from
Lemma \ref{generalizedarchimedeanlemma} by letting $x_i=ir$.

The above property is automatically inherited by subsystems
of the reals, such as the rational numbers $\mathbb Q$, the natural
numbers $\mathbb N$, the integers $\mathbb Z$, or the algebraic numbers.
All inherit the Generalized Archimedean Property in obvious ways.

Lemma \ref{generalizedarchimedeanlemma} allows us to adapt the notion
of Archimedeanness to other things than real numbers, even to things
for which there is no notion of arithmetic at all
(Lemma \ref{specializedarchimedeanlemma} would not adapt to such things).
All we need is a notion of ``significantly less than''.
For any set of things, some of which are ``significantly less than''
others, we can ask whether or not the property in Lemma
\ref{generalizedarchimedeanlemma} holds. We will make this formal in
Section \ref{backgroundsection}.

\begin{example}
\label{fuzzywidgets}
(Fuzzy widgets)
Suppose we have some fuzzy widgets, and we observe that certain
widgets are fuzzier than others. Naturally, we are inclined to
quantify the fuzziness of the widgets, assigning them numerical
fuzziness measures from some number system. Nine times out of ten,
we choose to use the real numbers, or a subsystem thereof,
often without a second thought. But suppose
among these widgets, there happen to be widgets $x_0,x_1,\ldots$
such that each $x_{i}$ is significantly less fuzzy than $x_{i+1}$,
and another widget $y$ such that $x_i$ is
significantly less fuzzy than $y$ for $i=0,1,\ldots$.
Suddenly, our decision to use real numbers puts us in a bind.
It is impossible to assign real number fuzziness measures to our
widgets in such a way that significantly less fuzzy widgets get
significantly smaller real number measures. That would
contradict Lemma \ref{generalizedarchimedeanlemma}.
\end{example}

Note that the above example does not require us to have any notion
of multiplying fuzziness by a natural number $n$ (as we would need to have
if we wanted to adapt Lemma \ref{specializedarchimedeanlemma}).
This illustrates the enhanced adaptability of Lemma \ref{generalizedarchimedeanlemma}.

The structure of this paper is as follows.
\begin{itemize}
    \item
    In Section \ref{backgroundsection} we formally adapt
    Lemma \ref{generalizedarchimedeanlemma} to obtain a notion of Archimedeanness
    for non-numerical structures, and demonstrate that non-Archimedean such
    structures cannot accurately be measured using the real numbers.
    \item
    In Section \ref{reinforcementlearningsection} we argue that
    traditional reinforcement learning probably will not lead to AGI because
    its rewards are overly constrained.
    \item
    In Section \ref{nontraditionalsection} we discuss non-traditional
    variations of reinforcement learning that avoid the problem of
    overly constrained rewards.
    \item
    In Section \ref{conclusionsection} we summarize and make concluding remarks.
\end{itemize}

\section{Generalized Archimedean Structures}
\label{backgroundsection}

The real numbers possess the Archimedean property, but other structures
may or may not. To make this more precise,
we introduce the following formalism, adapting from Lemma \ref{generalizedarchimedeanlemma}.

\begin{definition}
\label{archimedeandefn}
    A \emph{significantly-ordered structure} is a collection $X$ with
    an ordering $\ll$.
    For $x_1,x_2\in X$, we say $x_1$ is \emph{significantly less than}
    $x_2$ if $x_1\ll x_2$.
    A significantly-ordered structure is \emph{Archimedean} if it
    has the following property: for every $X$-sequence
    $x_0\ll x_1\ll x_2 \ll \cdots$,
    for every $y\in X$, there is some $i\in\{0,1,\ldots\}$ such that $y\ll x_i$.
\end{definition}

For any real number $r>0$, a prototypical example of an Archimedean
significantly-ordered structure is the real
numbers with $\ll$ defined such that
$x_1\ll x_2$ if and only if $x_1\leq x_2-r$.

\begin{definition}
    Suppose $(X,\ll)$ is a significantly-ordered structure.
    A function $f:X\to\mathbb R$ is said to \emph{accurately measure $(X,\ll)$}
    if there is some real $r>0$ such that the following requirement holds:
    \begin{itemize}
        \item
        For all $x_1,x_2\in X$, $x_1\ll x_2$ if and only if
        $f(x_1)\leq f(x_2)-r$.
    \end{itemize}
\end{definition}

The following proposition formalizes the dilemma we illustrated in
Example \ref{fuzzywidgets}.

\begin{proposition}
\label{maindilemma}
(Inadequacy of the reals for non-Archimedean structures)
    Suppose $(X,\ll)$ is a significantly-ordered structure.
    If $X$ is non-Archimedean, then no function $f:X\to\mathbb R$
    accurately measures $(X,\ll)$.
\end{proposition}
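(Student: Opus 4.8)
The plan is to argue by contradiction, using $f$ to transport a witness of non-Archimedeanness from $X$ into $\mathbb{R}$, where Lemma \ref{generalizedarchimedeanlemma} forbids any such witness. So I would suppose, toward a contradiction, that $X$ is non-Archimedean and yet some $f:X\to\mathbb{R}$ accurately measures $(X,\ll)$, with associated threshold $r>0$. First I would unpack the hypothesis: by Definition \ref{archimedeandefn}, the \emph{failure} of the Archimedean property means there is some $X$-sequence $x_0\ll x_1\ll x_2\ll\cdots$ together with some $y\in X$ for which $y\ll x_i$ holds for no $i$.

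Next I would push this sequence through $f$. Since $x_i\ll x_{i+1}$ for every $i$ and $f$ accurately measures $(X,\ll)$, the forward direction of the accuracy biconditional yields $f(x_i)\leq f(x_{i+1})-r$; that is, each $f(x_i)$ is \emph{significantly less than} $f(x_{i+1})$ in exactly the real-number sense used in Lemma \ref{generalizedarchimedeanlemma}, with the very same $r$. Hence $f(x_0),f(x_1),f(x_2),\ldots$ is precisely the kind of significantly-increasing real sequence to which that lemma applies.

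Finally I would invoke Lemma \ref{generalizedarchimedeanlemma} with the target value $f(y)$: it guarantees some index $i$ with $f(y)\leq f(x_i)-r$, i.e.\ $f(y)$ is significantly less than $f(x_i)$. The backward direction of accuracy then converts this real-number inequality back into the structural statement $y\ll x_i$, contradicting the choice of $y$. This closes the argument and shows no accurate $f$ can exist.

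The proof is short once this translation is in place, so the only point that demands genuine care is that accurate measurement is an \emph{if and only if} condition: I need the forward implication to relay $x_i\ll x_{i+1}$ into a significantly-increasing real sequence, and the backward implication to pull the real-number conclusion $f(y)\leq f(x_i)-r$ back to $y\ll x_i$. Keeping a single shared threshold $r$ throughout---the one supplied by accuracy---is exactly what makes both translations match the hypothesis of Lemma \ref{generalizedarchimedeanlemma}, and I expect this bookkeeping to be the main (and only real) obstacle.
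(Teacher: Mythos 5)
Your proof is correct and follows essentially the same route as the paper's: assume an accurate $f$ exists, transport the non-Archimedean witness sequence and $y$ into $\mathbb{R}$ via the accuracy biconditional, and derive a contradiction with Lemma \ref{generalizedarchimedeanlemma}. The only cosmetic difference is where the contradiction surfaces (you pull the lemma's conclusion back to $y\ll x_i$ in $X$, while the paper lets the real-number statements contradict the lemma directly), which is logically the same argument.
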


\begin{proof}
    Assume, for sake of a contradiction, that some $f:X\to\mathbb R$
    exists which accurately measures $(X,\ll)$. Thus there is some real $r>0$ such that
    for all $x_1,x_2\in X$, $x_1\ll x_2$ if and only if $f(x_1)\leq f(x_2)-r$.
    Since $X$ is non-Archimedean, there is some $X$-sequence
    $x_0\ll x_1\ll x_2\ll\cdots$ and some $y\in X$
    such that there is no $i$ such that $y\ll x_i$.
    By choice of $r$, each $f(x_{i})\leq f(x_{i+1})-r$ and there is no
    $i$ such that $f(y)\leq f(x_i)-r$.
    This contradicts Lemma \ref{generalizedarchimedeanlemma}.
\end{proof}

Proposition \ref{maindilemma} tells us that we cannot accurately measure
non-Archimedean structures using real numbers\footnote{There is an area of research
known as \emph{measurement theory}, which, traditionally,
``takes the real numbers as a pre-given numerical domain'' \citep{niederee1992numbers}.
Some work has been done to generalize measurement theory away from this assumption
\citep{narens1974, skala1975, rizza2016divergent}.
We would submit this paper as further motivation in that direction.}.
Any attempt to do so will necessarily be misleading, because ordering
relationships among the non-Archimedean structures will fail to be reflected
by the real-number measurements given to them.
We will inevitably end up like the puzzled physicist
brought up in a world of only natural numbers, confronted by a rod of
length $1.5$.

\begin{remark}
(Spearman's Law of Diminishing Returns)
Suppose $(X,\ll)$ is a non-Archimedean significantly-ordered structure
with elements $x_0,x_1,\ldots$ and $y$ such that
$x_0\ll x_1\ll\cdots$ and each $x_i\ll y$.
Suppose $f:X\to\mathbb R$ has the property that $f(x_0)<f(x_1)<\cdots$
and each $f(x_i)<f(y)$. Then the monotone convergence theorem implies
that $\lim_{i\to\infty}f(x_i)$ converges to some finite value.
This suggests a general \emph{law
of diminishing returns}: any time a non-Archimedean significantly-ordered
structure $(X,\ll)$ is measured using
real numbers, if the measurement does not blatantly violate $\ll$ (in other
words, if there are no $x_1\ll x_2$ such that $x_1$ is given a larger real-number
measurement than $x_2$), then there will inevitably be elements $x_0\ll x_1\ll \cdots$
exhibiting \emph{diminishing returns}, in the sense that the measurements of
$x_i$ and $x_j$ are approximately equal for all large enough $i,j$.
If human intelligence is non-Archimedean, this could potentially shed light on
a psychometrical phenomenon called \emph{Spearman's Law of Diminishing Returns}
\citep{spearman1927abilities, blum2017spearman, hernandez2019ai}, the empirical
tendency of cognitive ability tests to be less correlated in
high-intelligence populations. Even tiny measurement errors would eventually
dominate the test result differences as the true measurements plateau.
\end{remark}

\begin{example}
\label{nonexamples}
(Examples of non-Archimedean structures)
    \begin{itemize}
        \item
        (Sets)
        Say that set $x_1$ is significantly smaller than set $x_2$ if there is an injective
        function from $x_1$ into $x_2$ but there is no bijective function from $x_1$
        onto $x_2$. It is easy to show there are sets $x_0,x_1,\ldots$, with each
        $x_{i}$ significantly smaller than $x_{i+1}$, and each $x_i$ is
        significantly smaller than
        $y=\cup_{i=0}^\infty x_i$.
        Thus, sets are
        non-Archimedean. In the field of \emph{set theory}, mathematicians measure
        the size of sets using Georg Cantor's famous non-Archimedean number system,
        the cardinal numbers.
        \item
        (Logical theories)
        It is not difficult to come up with (for example) true
        theories $x_0,x_1,\ldots$ (in the language of arithmetic) such that
        each $x_{i+1}$ proves the consistency of $x_i$, and an additional
        true theory $y$ (in the language of arithmetic)
        which proves the consistency of $\cup_{i=0}^\infty x_i$.
        In a sense, then, each $x_i$ is significantly weaker than $x_{i+1}$
        (see G\"odel's incompleteness theorems), and each $x_i$ is
        significantly weaker than $y$. In this sense, logical theories
        are non-Archimedean. In the field
        of \emph{proof theory} \citep{pohlers2008proof, rathjen},
        logicians measure the logical strength of theories using computable
        ordinal numbers, another non-Archimedean number system.
        \item
        (Asymptotic runtime complexities)
        Suppose $x_0,x_1,\ldots$ are algorithms such that each $x_i$ has
        runtime complexity $\Theta(n^i)$, and suppose $y$ is an algorithm
        with runtime complexity $\Theta(2^n)$. Then in a certain sense, each
        $x_{i}$ has significantly lower asymptotic runtime complexity than $x_{i+1}$,
        and each $x_i$ has significantly lower asymptotic runtime complexity than
        $y$. In this sense, asymptotic runtime complexity is non-Archimedean.
        In computer science, these runtime complexities are usually measured using
        big-$O$, big-$\Theta$, or similar notation systems.
    \end{itemize}
\end{example}

\begin{example}
\label{speculativeexamples}
    (Speculative examples of potentially non-Archimedean structures)
    Certain structures might plausibly be non-Archimedean, but it is a difficult
    question to say whether they truly are or not. The reader could come up with
    such examples in great abundance.
    \begin{itemize}
        \item
        (Musical beauty)
        Assuming there is such a thing as objective
        musical beauty (not contingent on features of the human condition, etc.),
        then it is plausible that musical beauty might be non-Archimedean, in the following
        sense: there might be songs $x_0,x_1,\ldots$ such that each $x_i$
        is significantly less beautiful than $x_{i+1}$, and another song
        $y$ such that each $x_i$ is significantly less beautiful than $y$.
        \item
        (Ethical utility)
        Early utilitarian Jeremy Bentham suggested a hedonistic
        calculus in which pleasure measurements would be assigned to
        actions, to help adjudicate ethical dilemmas.
        His successor, John Stuart Mill, objected that some actions are incomparably
        better than others: ``If one of [two pleasures] is, by those
        who are competently acquainted with both, placed so far above the other that
        they prefer it ...\ and would not resign it for any quantity of the other
        pleasure which their nature is capable of, we are justified in ascribing to
        the preferred enjoyment a superiority in quality, so far outweighing quantity
        as to render it, in comparison, of small account'' \citep{mill}.
        This suggests that Bentham's pleasures are non-Archimedean.
        \item
        (AGI)
        It is plausible that there
        are\footnote{As hinted by Protagoras, assuming Protagoras's own intelligence
        stays constant and remains higher than the intelligence of his student
        and that they live forever and that \emph{better} means \emph{significantly
        better}:
        ``The very day you start, you will go home a better man, and the same thing
        will happen the day after. Every day, day after day, you will get better
        and better'' \citep{protagoras}.} AGIs $x_0,x_1,\ldots$ such that
        each $x_{i}$ is significantly less
        intelligent than $x_{i+1}$, and another AGI $y$ such that each $x_i$
        is significantly less
        intelligent than $y$. We first pointed this out in
        \citep{alexander2019measuring}, where we propose measuring the
        intelligence of mechanical
        knowing agents using computable ordinals, the same non-Archimedean number system
        which proof theorists use to measure logical strength of mathematical
        theories. Incidentally, if AGI intelligence is non-Archimedean, then
        Proposition \ref{maindilemma} shows it is
        impossible to measure machine intelligence using real numbers without some
        of those measurements being misleading\footnote{This would solve an open problem
        implicitly stated by \citet{legg} when they said of their
        real-number universal intelligence measure: ``...none of these people have
        been able to communicate why the work [on measuring universal intelligence
        using real numbers] is so obviously flawed in any concrete way ...
        If anyone would like to properly explain their position to us in the future,
        we promise not to chase you down the street!''}.
        \item
        (Nonstandard cosmologies)
        Some
        authors
        \citep{al2016surreal, andreka2012logic, reeder2012infinitesimals,
        rosinger2007cosmic, chen2019infinitesimal} have
        even speculated about the nature of non-Archimedean space and/or
        time.
    \end{itemize}
\end{example}

\section{Reinforcement learning}
\label{reinforcementlearningsection}

In reinforcement learning (RL), an agent interacts with an environment,
taking actions from a fixed set of possible actions. With every action the
agent takes, the environment responds with a new observation and with a
reward. In traditional RL, these rewards are real numbers (many
authors further constrain them to be rational numbers).

By restricting rewards to be real (or rational) numbers, we unconsciously
constrain RL to only be applicable toward tasks of an inherently Archimedean
nature. For example, \citet{wirth2017survey} point out that
in tasks related to cancer treatment \citep{zhao2009reinforcement},
``the death of a patient should be avoided at any cost. However, an
infinitely negative reward breaks classic reinforcement learning algorithms
and arbitrary, finite values have to be selected.'' This problem could be
avoided if instead of real numbers, rewards were drawn from a suitable
non-Archimedean number system containing negative infinities. Doing so would
be a departure from traditional RL.

\begin{example}
\label{musicalexample}
To give an intuitive example, assume that musical beauty is non-Archimedean,
as in Example \ref{speculativeexamples}. We can imagine environments where the
RL agent is tasked with composing songs. For example, the possible actions the
agent is allowed to take might include one action for each piano key, plus
an additional ``stand and bow'' action to signal that a song is
finished.
Whenever the agent stands and bows, the agent is rewarded with applause based on
the beauty of the song the agent
composed\footnote{To quote \cite{wang2015assumptions}: ``Decision makings often do not happen
at the level of basic operations, but at the level of composed actions, where
there are usually infinite possibilities.''}. Assuming
musical beauty is
non-Archimedean, such an environment falls outside the possibility of traditional
RL. By Lemma \ref{maindilemma}, there is no way to assign real number
rewards to songs without misleading the agent. If $x_0,x_1,x_2,\ldots$ are songs
where each $x_{i}$ is significantly less beautiful than $x_{i+1}$, and
all the $x_i$ are significantly less beautiful than another song $y$,
then there is no way to
assign real-valued rewards to these songs such that each $x_{i}$ gets
significantly less reward than $x_{i+1}$ and significantly less reward
than $y$.
\end{example}

Or, to re-use the cancer example, assume there are certain bad procedures the
robotic surgeon could take, each one significantly worse than the previous,
but all still significantly better than killing the patient. There is no way
to assign real-valued rewards to these actions, and to killing the patient,
in such a way that each bad action gets punished significantly harsher than
the previous, but still significantly more forgivingly than the punishment for
killing the patient.

The reader might object by challenging the non-Archimedeanness
of music and of medical procedures. But we only used those to make the examples
more intuitive. If the reader insists, we can resort to mathematical tasks.

\begin{example}
\label{theoryexample}
Imagine that the agent is tasked with
typing up mathematical theories, and when the agent stands and bows, the agent
is rewarded with applause based on the proof-theoretical strength of the theory
(or hit with tomatoes if the theory is inconsistent).
In Example \ref{nonexamples} we noted that proof-theoretical strength of theories
is non-Archimedean. There exist theories $x_0,x_1,\ldots$, each significantly
proof-theoretically weaker than the next, and another theory $y$,
significantly proof-theoretically stronger than all the $x_i$'s. We cannot
possibly assign real-valued rewards to these theories without misleading the
agent.
\end{example}

The reader might object to Example \ref{theoryexample} on the grounds that
judging the proof-theoretical
strength of a theory is inherently non-computable anyway. The example could be
modified so that instead of typing up mathematical theories, the agent has to
type up mathematical subtheories in (say) the language of Peano arithmetic,
accompanied by consistency proofs in (say) ZFC. It can be shown that the
proof-theoretical strength of mathematical theories is still non-Archimedean,
even when restricted to subtheories of arithmetic whose consistency can be
proven in ZFC\footnote{For example, let $x_0$ be the theory of Peano arithmetic,
and for each $i$, let $x_{i+1}$ be $x_i$ together with $CON(x_i)$, a canonical axiom
encoding the consistency of $x_i$. Let $y$ be the theory of Peano arithmetic
along with $CON(\cup_i x_i)$. ZFC is certainly adequate to prove the consistency of
each of these theories. In the sense of Example \ref{nonexamples}, each $x_{i}$ is
significantly weaker than $x_{i+1}$ and significantly weaker than $y$.}.

The reader might object that the above theories-with-proofs example is contrived.
But an AGI with human or better intelligence should have no problem
at least comprehending and attempting such a task (regardless of whether or not
the AGI is able to perform well at it). When we prove that the Halting Problem
is unsolvable, we do so by considering contrived programs that we could write if
the Halting Problem were solvable. The contrivedness of those programs does not
invalidate the proof of the unsolvability of the Halting Problem. Again, when we
prove that C++ templates are Turing complete \citep{veldhuizen}, we do so by
considering extremely
bizarre C++ templates that would never arise naturally in a software
studio. This does not invalidate the proof that C++ templates are Turing complete.

Finally, the reader might object that approximating infinite rewards with arbitrary
large finite rewards is good enough. Who cares (the argument might go) whether
pushing a button gives the agent infinite pleasure or only a million units of pleasure?
Either way (the argument goes) the agent is going to learn to prefer that button
over a button that gives only $.1$ units of pleasure. The following example shows
that this logic breaks down in non-Markov environments.

\begin{example}
\label{funnyexample}
(Delayed gratification)
Consider an environment with a red button and a blue button.
Pushing the red button always grants $+1$ reward.
As for the blue button, suppose the agent presses the blue button for the
$i$th time. If $i=2^j$ for some integer $j$, then the agent shall receive a
reward of $\omega$ (the smallest infinite ordinal), but otherwise, the agent
shall receive $0$ reward.
If we approximate $\omega$ with a real-value of, say, $1,000,000$,
then after a long enough time spent in the environment, an AGI will be misled into
thinking that it isn't worth the longer and longer wait-times between blue-button
rewards: eventually, it will take more than $1,000,000$ blue-button-presses to get
rewarded, and the rewards will mislead the AGI into thinking
it more worthwhile to get the guaranteed $+1$ reward from the red button.
\end{example}

Our critic could respond to Example \ref{funnyexample}
by making the approximation dynamic, say, making the
$2^j$th press of the blue button grant $1000000\cdot 2^j$ reward, but at this point,
the critic is clearly just hard-coding the correct actions into the reward function,
something which is only possible in Example \ref{funnyexample} because the environment
is simple enough that we can completely understand it ourselves. For the
kinds of non-trivial environments where AGI would actually be useful, such carefully
engineered reward approximations would quickly become intractible.

Reinforcement Learning is useful for many practical tasks, but at least in
its traditional flavor, it is too constrained (by its arbitrary choice of number
system for its rewards) to apply to certain
non-Archimedean tasks\footnote{Perhaps explaining why
``despite almost two decades of RL research, there has been little solid
evidence of RL systems that may one day lead to [AGI]''
\citep{livingston}.}, which, however contrived they are, could certainly be
attempted by an AGI. Traditional reinforcement learning will probably
not lead to AGI.

\section{Non-traditional reinforcement learning}
\label{nontraditionalsection}

We have argued that traditional RL will probably not lead to AGI, because
an AGI is capable of attempting non-Archimedean tasks whose rewards are
too rich to express using real numbers. There are at least two
potential ways to change RL so as to make it applicable to such tasks and,
thus, at least potentially capable of leading to AGI. Of course, there is
no guarantee that removing the roadblock in this paper will cause RL to
lead to AGI. There might be other roadblocks besides the inadequate reward
number system\footnote{For example, many RL authors consider non-deterministic
environments where rewards and observations include an element of
randomness. The probabilities involved are, traditionally, assumed to be
real numbers. Perhaps some recent work \citep{benci2013non} on non-Archimedean
probability could be relevant against that roadblock.}.

\subsection{Preference-based reinforcement learning}

A lot of exciting research has been done on non-traditional variations
of RL where, instead of giving the agent numerical rewards for taking actions,
one instead informs the agent about the relative preference of various
actions or action-sequences. See \citep{wirth2017survey} for a survey.
This nicely side-steps the problems from this paper.


\subsection{Reinforcement learning with other number systems}

The most obvious way to modify RL to avoid the problems presented in this
paper is to change which number system is used\footnote{Anticipated
by \citet{rizza2016divergent}.}.
As far as this author is aware,
the choice to use real (or rational) numbers for rewards was not made based
on any fundamental criteria\footnote{\cite{niederee1992numbers} points out
that there are
no deeper reasons to assume that the number system should necessarily
even have the same cardinality as $\mathbb R$. And \cite{rizza2016divergent} says:
``No particular feature of the space of informational
states suggests that such a codomain [as $\mathbb R$] should be selected''.}. The real
(or rational) numbers are currently a
useful pragmatic choice because they are easy to compute with using 21st
century software and 21st century school curricula, but that's hardly relevant
in the field of genuine AGI. One might
say the real numbers were a good choice because they are familiar, but even
that is arguable: in general, students are usually not taught what the
real numbers \emph{actually are},
unless they major in pure mathematics at the university level. Anyway,
the familiarity argument is totally irrelevant in the field of AGI.

Various non-Archimedean number systems exist. Number systems can be
discrete or continuous; the nature of reinforcement learning clearly
suggests a continuous number system. We will consider three continuous
number systems: formal Laurent series; hyperreal numbers; and
surreal numbers.

\subsubsection{Formal Laurent series}

David \cite{tall1980looking} described the following
real-number-extending number system (which he called the ``superreals'', but
that vocabulary does not seem to have caught on).

\begin{definition}
A \emph{formal Laurent series} is a formal expression of the form
\[
\sum_{j=-\infty}^\infty a_j \epsilon^j
\]
such that there is some integer $j_0$ such that $a_j=0$ for all $j<j_0$.
Suppose $A=\sum_{j=-\infty}^\infty a_j\epsilon^j$
and $B=\sum_{j=-\infty}^\infty b_j\epsilon^j$ are two distinct formal Laurent series.
We declare $A<B$ if and only if $a_j<b_j$ where $j$ is the smallest
index such that $a_j\not=b_j$.
\end{definition}

For brevity, we will write $a\epsilon^b$ for the formal Laurent series
$\sum_{j=-\infty}^{\infty} a_j\epsilon^j$ where $a_b=a$ and $a_j=0$ for all $j\not=0$.
Likewise, we may write, for example, $5\epsilon^{-1}+2\epsilon^3$ for the
formal Laurent series $\sum_{j=-\infty}^{\infty} a_j\epsilon^j$ where $a_{-1}=5$,
$a_3=2$, and $a_j=0$ for all $j\not\in\{-1,3\}$, and similarly for other
finite sums of powers of $\epsilon$.

We can consider the real numbers $\mathbb R$ to be embedded in the formal Laurent
series by way of the embedding $r\mapsto r\epsilon^0$. Having done so, the intuition is that,
for example, $1\epsilon^1$ is what we might call a ``first-order
infinitesimal number'', smaller
than every positive real;
$1\epsilon^2$ is what we might call a ``second-order infinitesimal number'',
smaller than every
positive first-order infinitesimal number; and so on. Likewise,
$1\epsilon^{-1}$ is what we might call a ``first-order infinite number'', bigger than
every real; $1\epsilon^{-2}$ is what we might call a ``second-order infinite number'', bigger
than every first-order infinite number; and so on. Thus, the formal Laurent series
should suffice to address the specific problem described by
\cite{wirth2017survey} in which an infinite negative reward is required when the
RL agent kills the cancer patient.

There are natural ways to define arithmetic on formal Laurent series, but we will
avoid those details here.
The advantage of the formal Laurent series number system is that it is
relatively concrete, compared to the more abstract hyperreal or surreal numbers
discussed below.

\begin{example}
(Examples of formal Laurent series comparisons)
\begin{enumerate}
    \item Consider $A=5\epsilon^{-1}-2\epsilon^{0}+3\epsilon^1+4\epsilon^2$ and
    $B=5\epsilon^{-1}-2\epsilon^0+1\epsilon^1+4\epsilon^2+5\epsilon^6$.
    The $\epsilon^{-1}$- and $\epsilon^0$-coefficients of $A$ and $B$ are equal, so
    we compare their $\epsilon^1$-coefficients. $A$ has an $\epsilon^1$-coefficient of $3$ and
    $B$ has an $\epsilon^1$-coefficient of $1$, and $3>1$, so $A>B$.
    \item Consider $A=999999\epsilon^5$ and $B=0.00001\epsilon^{4}$.
    The $\epsilon^{4}$-coefficient of $A$ is $0$, which is smaller than
    $B$'s $\epsilon^{4}$-coefficient ($0.00001$), so $A<B$.
\end{enumerate}
\end{example}

There is a natural way to consider formal Laurent series as a significantly-ordered
structure, generalizing the notion of ``significantly greater than'' from
Lemma \ref{generalizedarchimedeanlemma}.

\begin{definition}
\label{significantorderednessoflaurent}
\begin{enumerate}
\item
For every Laurent series $A=\sum_{j=-\infty}^\infty a_j\epsilon^j$,
let $o(A)$ (the \emph{order} of $A$)
be the smallest integer $j$ such that $a_j\not=0$, or $o(A)=\infty$ if
there is no such $j$.
Let $LC(A)$ (the \emph{leading coefficient} of $A$)
be the $\epsilon^{o(A)}$-coefficient of $A$, or $0$ if $o(A)=\infty$.
\item
Let $r>0$ be any positive real number.
For any formal Laurent series $A=\sum_{j=-\infty}^\infty a_j\epsilon^j$
and $B=\sum_{j=-\infty}^\infty b_j\epsilon^j$,
we say $A\ll_r B$ if one of the following conditions holds:
    \begin{itemize}
        \item
        $o(A)>o(B)$ and $LC(B)>0$; or
        \item
        $o(A)<o(B)$ and $LC(A)<0$; or
        \item
        $o(A)=o(B)$ and $LC(A)\leq LC(B)-r$.
    \end{itemize}
\end{enumerate}
\end{definition}

\begin{lemma}
For any real $r>0$, the formal Laurent series, considered as a significantly-ordered
structure according to $\ll_r$, are non-Archimedean.
\end{lemma}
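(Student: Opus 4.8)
The plan is to prove non-Archimedeanness directly, by exhibiting an explicit counterexample to the property in Definition \ref{archimedeandefn}. That property fails precisely when there is a $\ll_r$-increasing chain $x_0 \ll_r x_1 \ll_r x_2 \ll_r \cdots$ together with a single element $y$ for which no $i$ satisfies $y \ll_r x_i$. So it suffices to write down one such chain and one such $y$, and then verify the two requirements clause-by-clause against Definition \ref{significantorderednessoflaurent}.

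The natural witness exploits the gap in orders: the embedded reals all sit at order $0$, whereas $\epsilon^{-1}$ is a genuinely infinite element at order $-1$. I would take $x_i = (ir)\epsilon^0$ for $i = 0, 1, 2, \ldots$ (the embedded reals $0, r, 2r, \ldots$) and set $y = \epsilon^{-1}$. First I would verify the chain $x_i \ll_r x_{i+1}$. For $i \geq 1$ both terms have order $0$ with leading coefficients $ir$ and $(i+1)r$, so the third clause of Definition \ref{significantorderednessoflaurent} applies because $ir \leq (i+1)r - r$. The only edge case is $x_0 = 0$, whose order is $\infty$ by convention; here the first clause applies instead, since $o(x_0) = \infty > 0 = o(x_1)$ and $LC(x_1) = r > 0$.

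Next I would check that $y \not\ll_r x_i$ for every $i$. Because $o(y) = -1$ lies strictly below $o(x_i)$ in every case, the only clause of Definition \ref{significantorderednessoflaurent} that could conceivably fire is the second one; but that clause additionally demands $LC(y) < 0$, whereas in fact $LC(y) = 1 > 0$. Hence none of the three clauses hold, so $y$ is not significantly less than any $x_i$. Combined with the increasing chain, this produces exactly the configuration forbidden by the Archimedean property, establishing that $(X, \ll_r)$ is non-Archimedean.

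I do not expect a genuine obstacle here: the content is a finite case analysis, and the only thing to watch is the bookkeeping in Definition \ref{significantorderednessoflaurent}, in particular the sign conditions on the leading coefficients and the $o(A) = \infty$ convention for the zero series. The one conceptual point, which is really the \emph{idea} of the argument rather than a difficulty, is to place $y$ at a strictly lower order with a positive leading coefficient: this simultaneously makes every $x_i$ significantly less than $y$ while blocking $y \ll_r x_i$, so that $y$ plays the role of the unreachable upper bound that the Archimedean property rules out.
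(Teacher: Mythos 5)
Your proof is correct and takes essentially the same approach as the paper: an explicit witness consisting of a chain at a fixed order whose leading coefficients increase by $r$ (verified via the third clause of Definition \ref{significantorderednessoflaurent}), together with a $y$ at strictly lower order with positive leading coefficient, so that every clause that could give $y \ll_r x_i$ fails. The only cosmetic difference is that the paper places its chain at order $1$ (the infinitesimals $(i+1)r\epsilon^1$) with $y = 1\epsilon^0$, whereas you place the chain at order $0$ with $y = \epsilon^{-1}$, which obliges you to handle the $x_0 = 0$ edge case via the $o(0)=\infty$ convention---something the paper sidesteps by starting its chain at $r\epsilon^1$ rather than at zero.
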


\begin{proof}
    Let $r>0$.
    Recall from Definition \ref{archimedeandefn} that the formal Laurent series,
    considered as a significantly-ordered structure according to $\ll_r$, are
    Archimedean if and only if the following statement is true:
    \begin{itemize}
    \item
        For every sequence $x_0\ll_r x_1\ll_r x_2 \ll_r \cdots$ of formal Laurent series,
        for every formal Laurent series $y$, there is some $i$ such that
        $y\ll_r x_i$.
    \end{itemize}
    We will exhibit a particular sequence $x_0\ll_r x_1\ll_r x_2\ll_r\cdots$
    of formal Laurent series, and a formal Laurent series $y$, such that the
    above statement fails, thereby showing that the formal Laurent series
    are non-Archimedean.

    Let $x_0=r\epsilon^1$, $x_1=2r\epsilon^1$, $x_2=3r\epsilon^1$, and in general let
    $x_i=(i+1)r\epsilon^1$. Let $y=1\epsilon^0$.
    Thus $o(x_0)=o(x_1)=o(x_2)=\cdots=1$,
    and $LC(x_0)=r$, $LC(x_1)=2r$, $LC(x_2)=3r$, and in general $LC(x_i)=(i+1)r$
    for each $i=0,1,2,\ldots$. Meanwhile, $o(y)=0$ and $LC(y)=1$.

    By Definition \ref{significantorderednessoflaurent},
    each $x_i\ll_r x_{i+1}$ for $i=0,1,2,\ldots$ (because each $o(x_i)=o(x_{i+1})=1$
    and each $LC(x_i)=(i+1)r\leq LC(x_{i+1})-r=(i+2)r-r=(i+1)r$).
    Thus, if the formal Laurent series were Archimedean, there would have to be some
    $i\in\{0,1,\ldots\}$
    such that $y\ll_r x_i$. This is impossible because
    for any such $i$, $o(y)=0<1=o(x_i)$ and $LC(x_i)=(i+1)r>0$.
\end{proof}

Unfortunately, although the formal Laurent series contain infinities and infinitesimals,
in a sense we will make formal, they still do not contain ``enough'' infinities and
infinitesimals to accomodate the fully general environments that an AGI should be able
to navigate. To make this formal, we introduce a weaker
notion of Archimedeanness.

\begin{definition}
Suppose $(X,\ll)$ is a significantly-ordered structure.
We define a new order $\ll'$ on $X$ as follows.
For any $x,y\in X$, we declare $x\ll' y$ if and only if there is an $X$-sequence
$x_0,x_1,\ldots$ such that the following conditions hold:
\begin{enumerate}
    \item
    $x_0=x$.
    \item
    Each $x_i\ll x_{i+1}$.
    \item
    Each $x_i\ll y$.
\end{enumerate}
We say $(X,\ll)$ is \emph{semi-Archimedean} if
the following condition holds:
\begin{itemize}
    \item
    For every $X$-sequence $x_0\ll' x_1\ll' x_2\ll' \cdots$,
    for every $y\in X$, there is some $i$
    such that $y\ll x_i$.
\end{itemize}
\end{definition}

To be semi-Archimedean is a weaker condition than to be Archimedean, but it is
still a condition, and one which there is evidently no reason to assume should
constrain reinforcement learning rewards in general.
For example, just as it is unclear whether musical beauty is Archimedean,
likewise, it is unclear whether musical beauty is even semi-Archimedean.
If musical beauty is not semi-Archimedean,
then Example \ref{musicalexample} does not merely suggest the
inadequacy of real number rewards, but of rewards from any semi-Archimedean number
system. And if musical beauty is too informal,
we can still fall back to mathematical theories (Example \ref{theoryexample}),
for the strength of mathematical theories can be shown not to be semi-Archimedean.

In the following theorem, we show that the formal Laurent series are semi-Archimedean.
By the above paragraph, this suggests that even if we extended reinforcement
learning to permit
formal Laurent series rewards, the resulting framework would still probably not lead
to AGI.

\begin{theorem}
\label{tedioustheorem}
For any real $r>0$,
the formal Laurent series are semi-Archimedean when considered as a
significantly-ordered structure as in Definition \ref{significantorderednessoflaurent}.
\end{theorem}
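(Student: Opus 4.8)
The plan is to reduce semi-Archimedeanness to a single structural fact about $\ll'$-chains: along any chain $x_0 \ll' x_1 \ll' x_2 \ll' \cdots$, from some index onward the leading coefficients are strictly positive and the orders strictly decrease. Since orders are integers, a strictly decreasing order sequence tends to $-\infty$, and then any target $y$ is overtaken: once $o(x_i) < o(y)$ while $LC(x_i) > 0$, the first clause of Definition \ref{significantorderednessoflaurent} gives $y \ll_r x_i$. So the whole theorem comes down to controlling how the order and the sign of the leading coefficient evolve across a single $\ll'$-step.

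To get that control I would establish two facts about the derived order. (A) If $a \ll' b$ then $LC(b) \geq 0$; equivalently, if $LC(b) < 0$ there is no infinite $\ll_r$-increasing chain lying $\ll_r$-below $b$. (B) If $a \ll' b$ and $LC(a) > 0$, then $o(a) > o(b)$. From (A), every $x_j$ with $j \geq 1$ has $LC(x_j) \geq 0$, and a short argument rules out $x_j = 0$ for $j \geq 2$: that would force $x_{j-1} \ll_r 0$, hence $LC(x_{j-1}) < 0$, contradicting (A) applied at index $j-1$. Thus $LC(x_j) > 0$ for $j \geq 2$. Feeding this into (B) at each step gives $o(x_2) > o(x_3) > \cdots$, exactly the order collapse I want.

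The heart of the matter, and the step I expect to be the main obstacle, is proving (A) together with the equal-order subcase hidden inside (B): I must show that no infinite $\ll_r$-increasing chain can sit below a bound $b$ while trapped at a single order. This is really the Archimedean property of the reals (Lemma \ref{generalizedarchimedeanlemma}) applied to leading coefficients. Concretely, for (A) I would observe that when $LC(b) < 0$ every $z$ with $z \ll_r b$ satisfies $o(z) \leq o(b)$ and $LC(z) < 0$; since lowering the order across a $\ll_r$-step requires the $\ll_r$-greater element to have positive leading coefficient, the chain's orders are then non-decreasing and bounded by $o(b)$, hence eventually constant, after which the leading coefficients must increase by at least $r$ per step while staying negative---impossible by Lemma \ref{generalizedarchimedeanlemma}. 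For (B), if instead $a \ll_r b$ held through the equal-order clause with $LC(a) > 0$, the same trap applies: any chain from $a$ staying below $b$ is pinned at order $o(a) = o(b)$ with positive leading coefficients bounded above by $LC(b) - r$ and increasing by $r$ each step, again contradicting Lemma \ref{generalizedarchimedeanlemma}. So the equal-order clause cannot support an infinite chain, and since the other $\ll_r$-clauses with $LC(a) > 0$ are incompatible with $a \ll_r b$ except for $o(a) > o(b)$, statement (B) follows.

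Finally I would assemble the pieces: given any $y$, pick $i \geq 2$ with $o(x_i) < o(y)$, available since $o(x_j) \to -\infty$ and automatic when $y = 0$ (there $o(y) = \infty$). Then $o(y) > o(x_i)$ and $LC(x_i) > 0$ yield $y \ll_r x_i$ by the first clause of $\ll_r$, which is precisely what semi-Archimedeanness requires. The only care needed is at the very start of the chain, where $x_0$ and $x_1$ may be zero or have negative leading coefficient; since the overtaking argument only invokes large $i$, these initial terms are harmless.
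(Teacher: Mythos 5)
Your proof is correct, and at the top level it has the same architecture as the paper's: the paper likewise isolates two lemmas about $\ll'$ (its Claim 1, that $a\ll' b$ implies $b\geq 0$, and its Claim 2, that $a\ll' b$ with $a\geq 0$ implies $o(a)>o(b)$ --- these are your (A) and (B) up to identifying $b\geq 0$ with $LC(b)\geq 0$), and then assembles them exactly as you do: from index $2$ onward the chain is positive, its orders strictly decrease and hence eventually drop below $o(y)$, and the first clause of Definition \ref{significantorderednessoflaurent} gives $y\ll_r x_i$. The genuine divergence is inside the proof of (A)/Claim 1, and there your version is the sound one. The paper argues that along an all-negative chain the orders must \emph{decrease} at infinitely many indices, and derives its contradiction once $o(x_i)<o(b)$; but among negative elements the clause ``$o(A)>o(B)$ and $LC(B)>0$'' can never fire, so a $\ll_r$-step between negative elements can only keep the order equal (third clause) or \emph{increase} it (second clause); moreover, $o(x_i)<o(b)$ with $x_i$ and $b$ both negative yields $x_i\ll_r b$ by the second clause, not the claimed contradiction $x_i>b$. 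Your argument --- orders non-decreasing and bounded above by $o(b)$, hence eventually constant, after which the leading coefficients must climb by at least $r$ per step while remaining negative, contradicting Lemma \ref{generalizedarchimedeanlemma} --- is precisely the repair of that step. (Similarly, in Case 2 of the paper's Claim 2 the displayed chain ``$o(b)\geq o(x_{i+1})>o(x_i)$'' has both inequalities reversed; your pinning argument for (B) arrives at the correct chain $o(b)\leq o(x_{i+1})<o(x_i)$.) In short: same decomposition and same final assembly, but your execution of the two sub-claims is the one that actually goes through as stated.
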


\begin{proof}
Let $r>0$. For simplicity, we write $\ll$ for $\ll_r$ and $\ll'$ for
$\ll'_r$.

Claim 1: Whenever $a\ll' b$, then $b\geq 0$.
To see this, assume $a\ll' b$, so there are $a=x_0,x_1,\ldots$ such that each
$x_i\ll x_{i+1}$ and each $x_i\ll b$.
If any $x_i\geq 0$ then, since $x_i\ll b$, it follows that $b\geq 0$, as desired.
But suppose every $x_i<0$.
If all the $o(x_i)$ were equal, then, since each
$x_i\ll x_{i+1}$, it would follow that each $LC(x_{i})\leq LC(x_{i+1})-r$, so Lemma
\ref{generalizedarchimedeanlemma} would imply $LC(x_i)>0$ for some $i$, contradicting
the assumption that $x_i<0$. So there is some minimal $i_1$ such
that $o(x_{i_1+1})\not=o(x_{i_1})$,
and since $x_{i_1}$ and $x_{i_1+1}$ are negative, this implies
$o(x_{i_1+1})<o(x_{i_1})$, and by minimality of $i_1$, $o(x_j)=o(x_0)$ for all
$j\leq i_1$.
By identical reasoning applied to the sequence $x_{i_1},x_{i_1+1},\ldots$,
there is some minimal $i_2>i_1$ such that $o(x_{i_2+1})<o(x_{i_2})$
and such that $o(x_j)=o(x_{i_1+1})$ for all $i_1+1\leq j\leq i_2$.
Continuing in this way, there are $i_1<i_2<\cdots$ such that $o(x_i)$ shrinks
for all $i=i_j$ and $o(x_i)$ stays constant everywhere else. Thus, if $b$ were negative,
there would be some $i$ such that $o(x_i)<o(b)$, which, since $x_i$ is also negative,
would imply $x_i>b$, contradicting that $x_i\ll b$. This proves Claim 1.

Claim 2: Whenever $a\ll' b$ and $a\geq 0$, then $o(a)>o(b)$.
To see this, assume $a\geq 0$ and $a\ll' b$, so there is a sequence
$a=x_0\ll x_1\ll\cdots$ with each $x_i\ll b$.

Case 1: All the $o(x_i)$ are equal. Then each $LC(x_{i+1})\geq LC(x_i)+r$, so by Lemma
\ref{generalizedarchimedeanlemma}, there is some $i$ such that $LC(x_i)>LC(b)$.
Since $x_i\ll b$, this implies $o(x_i)>o(b)$. Thus $o(a)=o(x_0)=o(x_i)>o(b)$, as desired.

Case 2: There is some minimal $i$ such that $o(x_{i+1})\not=o(x_{i})$.
Since $a\geq 0$ and $a=x_0$, it follows that $x_i\geq 0$ and $x_{i+1}\geq 0$. Since
$x_{i+1}>x_{i}$, this implies $o(x_{i+1})<o(x_{i})$.
Since $x_{i+1}\ll b$, this implies $o(b)\geq o(x_{i+1})>o(x_i)$.
By minimality of $i$, $o(x_i)=o(x_0)=o(a)$, so $o(a)>o(b)$, as desired. This proves Claim 2.

Finally we prove the theorem. Suppose $x_0\ll' x_1\ll' \cdots$, and let
$y$ be any formal Laurent series, we must show there is some $i$ such that
$y\ll x_i$. By Claim 1, $x_1\geq 0$.
Since $x_1\geq 0$ and $x_1\ll' x_2\ll'\cdots$, it follows that $x_i>0$ for all $i\geq 2$.
By Claim 2, for all $i\geq 1$, $o(x_{i+1})<o(x_i)$.
It follows that there is some $i\geq 2$
such that $o(x_i)<o(y)$.
Thus we are in the case where $o(y)>o(x_i)$ and (since $x_i> 0$) $LC(x_i)>0$,
so by Definition \ref{significantorderednessoflaurent}, $y\ll x_i$.
\end{proof}

\subsubsection{Hyperreal numbers}

The field of mathematics where the calculus is formalized with infinite and infinitesimal
quantities is called \emph{nonstandard analysis} \citep{robinson}. The numbers most
commonly associated with this field are the so-called \emph{hyperreal numbers}.

The hyperreal numbers can be
introduced axiomatically or by means of a semi-constructive method which depends on
usage of a certain black box, a device known as a \emph{free ultrafilter}. Logicians
have proven that free ultrafilters exist but that, unfortunately, it is impossible to
concretely exhibit one. This severely limits (if not completely ruins) the practical
usefulness of reinforcement learning with hyperreal rewards.

Nevertheless, the hyperreals
might be useful for proving abstract structural properties about AGI\footnote{Similar to
the way we use free ultrafilters in \citep{alexander2019intelligence} to obtain
comparators of the utility-maximizing ability of traditional deterministic RL agents,
and prove structural properties about said comparators.
In fact, in that paper, we essentially independently re-invented the free ultrafilter
construction of the hyperreals, without realizing it at the time!}.
It can be shown that the hyperreals are not even semi-Archimedean (much less Archimedean).
Thus, for the purpose of
proving abstract theorems about RL agents with fully generalized rewards, the hyperreals
would be more appropriate than the formal Laurent series.

\subsubsection{Surreal numbers}

All of the
well-known non-Archimedean extensions of $\mathbb R$
(including formal Laurent series and hyperreals) are subsystems of the
so-called \emph{surreal numbers}
\citep{conway, knuth, ehrlich2012absolute}. The surreal
numbers were initially discovered during John Conway's attempts to study
two-player combinatorial games like Go and Chess, so it would not be
surprising if they turn out to be important in the eventual development of
AGI.

Unlike the hyperreals,
the construction of the surreal numbers does not depend on any
non-constructive black boxes such as free ultrafilters.
They are constructed as the union of a hierarchy $S_\alpha$ of subsystems where
$\alpha$ ranges over the ordinal numbers. Assuming that agents with AGI are
implemented using computers with no additional power beyond the Church-Turing
Thesis, then for the purposes of AGI, it would be appropriate to restrict our
attention to some computable subset of the surreal numbers, which would
presumably be the union of
some hierarchy $C_\alpha$ where $\alpha$ ranges over the computable ordinal numbers.
For any particular level $C_\alpha$ in this hierarchy, we could consider the
sub-universe $E_\alpha$ of surreal-reward RL environments with rewards restricted
to $C_\alpha$.

Assuming AGI agents are Turing computable,
no individual AGI can possibly comprehend codes for all computable ordinals, because
the set of codes of computable ordinals is badly non-computably-enumerable.
This is profound, because it seems to suggest that any particular AGI can
only comprehend RL environments in $E_\alpha$ if that AGI can comprehend $\alpha$.
In other words, for any particular RL environment $e$ with computable surreal number
rewards, there must be some minimal computable ordinal $\alpha$ such that $e$ has
rewards from $E_\alpha$; if an AGI is not intelligent enough to comprehend $\alpha$,
then it seems like there should be no way for the AGI to comprehend $e$
either\footnote{This situation is reminiscent of \citep{hibbard2011measuring}.}.
We would submit
this state of affairs as evidence in favor of our thesis
\citep{alexander2019measuring} that a machine's intelligence ought to be measured
in terms of the computable ordinals which the machine comprehends.

The above paragraph points at a possible joint path
toward AGI incorporating both machine learning and symbolic logic---toward
``the integration of Symbolic and Statistical AI'' \citep{maruyama}---perhaps
a much-needed reconciliation of these two approaches.

\subsection{Alternate number systems: tentative verdict}

For many simple environments not too far outside of traditional RL,
formal Laurent series could probably serve as a fairly practical number system.
But formal Laurent series have limitations suggesting that RL
with formal Laurent series rewards will probably not be enough to reach AGI, for the
same reason that RL with real number rewards will probably not be enough.

Because of their dependence on free ultrafilters, the hyperreal numbers will
probably never be of practical use as RL rewards, but it could conceivably be
possible to use them to prove abstract structural results about AGI from a
bird's-eye view.

The surreal numbers (or a computable subset thereof) seem like the most promising
candidate for RL rewards that could plausibly lead to AGI. We would certainly
hesitate to call them ``practical'', though. To work with any but the most trivial
of surreal numbers, one would need to implement sophisticated symbolic-logical
machinery, and that's just to get one's foot in the door. This does, however, offer
a ray of hope in that doing deep learning techniques with surreal numbers
could be a way to combine both symbolic logic and statistical methods into a joint
approach.


\section{Conclusion}
\label{conclusionsection}

In traditional reinforcement learning, utility-maximizing agents interact
with environments, receiving real (or rational) number rewards in response to
actions, and using those rewards to update their behavior. We have argued that
the decision to limit rewards to real numbers is inappropriate in the context
of AGI, because the real numbers have the Archimedean property, which makes it
impossible to use them to accurately portray the value of actions when a task
involves inherently non-Archimedean rewards. Thus, we argue, traditional
RL probably will not lead to AGI, because a genuine AGI should have no trouble
comprehending and at least attempting tasks that inherently involve
non-Archimedean rewards. We suggested two possible ways
to modify traditional reinforcement learning to fix this bug: switch to
preference-based reinforcement learning, or else generalize reinforcement learning
to allow rewards from a non-Archimedean number system.

\section*{Acknowledgments}

We gratefully acknowledge Bryan Dawson, Jos{\'e} Hern{\'a}ndez-Orallo,
Mikhail Katz,
Brendon Miller-Boldt, Stewart Shapiro, the
SEC's Quantitative Analytics Unit's machine learning seminar,
and the reviewers
for comments and feedback.

\bibliography{arch}
\end{document}